\def\eqref#1{equation~\ref{#1}}
\def\1{\bm{1}}
\def\vtheta{{\bm{\theta}}}
\def\vb{{\bm{b}}}
\def\vh{{\bm{h}}}
\def\vr{{\bm{r}}}
\def\vx{{\bm{x}}}
\def\vz{{\bm{z}}}
\def\mI{{\bm{I}}}
\def\mW{{\bm{W}}}
\DeclareMathAlphabet{\mathsfit}{\encodingdefault}{\sfdefault}{m}{sl}
\SetMathAlphabet{\mathsfit}{bold}{\encodingdefault}{\sfdefault}{bx}{n}
\newcommand{\R}{\mathbb{R}}
\newcommand{\Cov}{\mathrm{Cov}}
\newtheorem{statement}{Statement}
\newtheorem{corollary}{Corollary}
\newtheorem{lemma}{Lemma}
\title{Neural Random Projection: From the Initial Task To the Input Similarity Problem}
\author{Alan Savushkin, Nikita Benkovich \& Dmitry Golubev \\
Kaspersky\\
Moscow 125212, Russia\\
\texttt{\{Alan.Savushkin, Nikita.Benkovich, Dmitry.S.Golubev\}@kaspersky.com} \\
}
\begin{document}

\maketitle

\begin{abstract}
The data representation plays an important role in evaluating similarity between objects. In this paper, we propose a novel approach for implicit data representation to evaluate similarity of input data using a trained neural network. In contrast to the previous approach, which uses gradients for representation, we utilize only the outputs from the last hidden layer of a neural network and do not use a backward step. The proposed technique explicitly takes into account the initial task and significantly reduces the size of the vector representation, as well as the computation time. The key point is minimization of information loss between layers. Generally, a neural network discards information that is not related to the problem, which makes the last hidden layer representation useless for input similarity task. 
In this work, we consider two main causes of information loss: correlation between neurons and insufficient size of the last hidden layer. To reduce the correlation between neurons we use orthogonal weight initialization for each layer and modify the loss function to ensure orthogonality of the weights during training. Moreover, we show that activation functions can potentially increase correlation. To solve this problem, we apply modified Batch-Normalization with Dropout. Using orthogonal weight matrices allow us to consider such neural networks as an application of the Random Projection method and get a lower bound estimate for the size of the last hidden layer. We perform experiments on MNIST and physical examination datasets. In both experiments, initially, we split a set of labels into two disjoint subsets to train a neural network for binary classification problem, and then use this model to measure similarity between input data and define hidden classes. Our experimental results show that the proposed approach achieves competitive results on the input similarity task while reducing both computation time and the size of the input representation.
\end{abstract}

\section{Introduction}\label{intro}
Evaluating object similarity is an important area in machine learning literature. It is used in various applications such as search query matching, image similarity search, recommender systems, clustering, classification. In practice, the quality of similarity evaluation methods depends on the data representation.

For a long time neural networks show successful results in many tasks and one such task is obtaining good representations. Many of these methods can be considered in terms of domain and task. The first case is when we have only unlabeled dataset. Then we can use autoencoders \citep{bank2020autoencoders} or self-supervised approaches \citep{chen2020simple, devlin2018bert, doersch2015unsupervised, dosovitskiy2014discriminative, gidaris2018unsupervised, noroozi2016unsupervised, noroozi2017representation, oord2018representation, peters2018deep}, which require formulation of a pretext task, which in most cases depends on the data type. These methods can be called explicit because they directly solve the problem of representation learning. Moreover, these models can be used for transfer knowledge when we have labeled data only in the target domain. The second case is where we have labeled data in the source and target domains. Then we can apply a multi-task learning approach \citep{ruder2017overview} or fine-tune the models \citep{simonyan2014very, he2016deep} trained on a large dataset like ImageNet. Finally, there is the domain adaptation approach \citep{wang2018deep} where we have a single task but different source and target domains with labeled data in the target domain \citep{hu2015deep} or with unlabeled data \citep{li2016revisiting, yan2017mind, zellinger2017central}.

In our study the target task is to measure similarity between objects and to define hidden classes based on it.
We are interested in studying the issue of implicit learning of representations. Can the neural networks store information about subcategories if we don't explicitly train them to do this? More formally, we have the same source and target domains but different tasks and we don't have labeled data in the target domain. That makes our case different from the cases of transfer learning.

A solution to this problem could be useful in many practical cases. For example, we could train a model to classify whether messages are spam or not and then group spam campaigns or kind of attacks (phishing, spoofing, etc.) based on similarity measuring by trained neural network. Similar cases could be in the medicine (classifying patients into healthy/sick and grouping them by the disease) or in financial (credit scoring) area. The benefits are that we do not depend on the data type and, more importantly, we use only one model for different tasks without fine-tuning, which significantly reduces time for developing and supporting of several models.

Similar study was done in \citep{hanawa2020evaluation}, where authors proposed evaluation criteria for instance-based explanation of decisions made by neural network and tested several metrics for measuring input similarity. In particular, they proposed the Identical subclass test which checks whether two objects considered similar are from the same subclass. According to the results of their experiments, the most qualitative approach is the approach presented in \citep{charpiat2019input}, which proposed to measure similarity between objects using the gradients of a neural network. In experiments, the authors applied their approach to the analysis of the self-denoising phenomenon. Despite the fact that this method has theoretical guaranties and does not require to modify the model to use it, in practice, especially in real-time tasks, using gradients tends to increase the computation time and size of vector representation. This approach will be described in more detail in Section \ref{RW}.
To avoid these problems, we propose a method that only uses outputs from the last hidden layer of a neural network and does not use a backward step to vectorize the input. In our research, we found that a large loss of information occurs on the last hidden layer. We highlight two reasons for this: correlation of neurons and insufficient width of the last hidden layer. To solve these issues, we propose several modifications. First, we show that the weight matrix should be orthogonal. Second, we modify Batch-Normalization \citep{ioffe2015batch} to obtain the necessary mathematical properties, and use it with dropout\citep{srivastava2014dropout} to reduce the correlation caused by nonlinear activation functions. Using orthogonal weight matrices allows us to consider the neural network in terms of Random Projection method and evaluate the lower bound of the width of the last hidden layer. Our approach will be discussed in detail in Section \ref{PM}. Finally, in Section \ref{Exps} we perform experiments on MNIST dataset and physical examination dataset \citep{maxwell2017deep}. We used these datasets to show that our approach can be applied for any type of data and combined with different architectures of neural networks. In both experiments, we split a set of labels into two disjoint subsets to train a neural network for binary classification problem, and then use this model to measure the similarity between input data and define hidden classes. Our experimental results show that the proposed approach achieves competitive results on the input similarity task while reducing both computation time and the size of the input representation.

\section{Related works}\label{RW}
Using a trained neural network to measure similarity of inputs is a new research topic. In \citep{charpiat2019input} the authors introduce the notion of object similarity from the neural network perspective. The main idea is as follows: 
how much would parameter variation that changed the output for $\vx$ impact the output for $\vx'$? In principle, if the objects $\vx$ and $\vx'$ are similar, then changing parameters should affect the outputs in a similar way.
The following is a formal description for one- and multi-dimensional cases of the output value of a neural network. 
\paragraph{One-dimensional case}\label{1d-case}
Let $f_{\theta}(\vx) \in \R$ be a parametric function, in particular a neural network, $\vx, \text{ } \vx'$ be input objects, $\vtheta \in \R^{n_ {\vtheta}}$ - model parameters, $n_{\vtheta}$ - number of parameters. The authors proposed the following metric:
    
\begin{equation}\label{1d_metric}
    \rho_{\theta}(\vx, \vx') = \frac{\nabla_{\vtheta} f_{\vtheta}(\vx') \nabla_{\vtheta} f_{\vtheta}(\vx)}{||\nabla_{\vtheta} f_{\vtheta}(\vx')|| \text{ } ||\nabla_{\vtheta} f_{\vtheta}(\vx)||}  
\end{equation}
    
In this way, the object similarity is defined as the cosine similarity between the gradients computed at these points.
    
\paragraph{Multi-dimensional case}
    Let $f_{\vtheta}(x) \in \R^d, \text{ } d>1$. In this case, the authors obtained the following metric: 
    
\begin{equation}\label{md_metric}
    \rho_{\vtheta,d}(\vx, \vx') = \frac{1}{d} Tr(K_{\vtheta}(\vx, \vx')),
\end{equation}
    
where $K_{\theta}(\vx, \vx') = K^{-1/2}_{\vx',\vx'}K_{\vx,\vx'}K^{-1/2}_{\vx,\vx}$, and $K_{\vx',\vx} = \left.\frac{\partial f_{\vtheta}}{\partial \vtheta} \right\vert_{\vx'} \left.\frac{\partial f_{\vtheta}}{\partial \vtheta}^T\right\vert_{\vx}$ is calculated using the Jacobian matrix $\left.\frac{\partial f_{\vtheta}}{\partial \vtheta}\right\vert_{\vx}$.
    
\paragraph{Summary}
Unlike using third-party models for vectorization, such as VGG, this approach allows us to use any pre-trained neural network to calculate the similarity of input data. To achieve this, authors use gradients of neural network as illustrated in \eqref{1d_metric} and \eqref{md_metric}. This gradient-based solution takes into account all activations, which does not require the selection of a hidden layer for vectorization. However, it has a number of drawbacks. First of all, fast computation of gradients require additional computational resources. Second, the size of objects representation is $n_{\vtheta}$ for one-dimensional output and $n_{\vtheta}*d$ for multi-dimensional case. This means that increasing model complexity increases representation size and, as a result, can lead to large memory consumption. Motivation of this research is to develop an approach that reduces the size of the representation and uses only the forward pass of a trained neural network to work. In the next section, we will discuss the proposed approach in detail.

\section{Proposed method}\label{PM}
In this paper, we suggest using outputs from the last hidden layer of a trained neural network to encode inputs. This representation has several advantages. First, the output of the last layer is usually low-dimensional, which allows us to get a smaller vector dimension. Second, and more importantly, on the last hidden layer the semantics of the original problem is taken into account to the greatest extent. For example, for a classification problem, data is theoretically linearly separable on the last hidden layer. Informally, this allows us to measure similarities within each class, which reduces false cases in terms of the original problem. This is extremely important for clustering spam campaigns, because this representation reduces the probability to group spam and legitimate messages together. However, as already mentioned in the section \ref{intro}, a neural network that is trained to solve a simple problem does not retain the information necessary to solve a more complex one. Due to this fact, there is usually a significant information loss on the last hidden layer, which makes it impossible to apply this representation as is. We have identified the main causes of this: a strong correlation of neurons and an insufficient size of the last hidden layer. Our goal is to avoid these in order to make the proposed vectorization reasonable. In practice, last hidden layers are usually fully connected. For this reason, we consider only this type of the last layer. In \ref{nn_corr}, we show how to reduce correlation between neurons, and in \ref{random_projection} we offer an estimate of the lower bound of the size of the last hidden layer and prove that the proposed representation can be used for the input similarity problem. 
    
\subsection{Neuron correlation}\label{nn_corr}
The benefits of decorrelated representations have been studied in \citep{lecun2012efficient} from an optimization viewpoint and in \citep{cogswell2015reducing} for reducing overfitting. We consider the decorrelated representations from information perspective. Basically, correlation of neurons means that neurons provide similar information. Therefore, we gain less information from observing two neurons at once. This phenomenon may occur due to the fact that the neural network does not retain more information than is necessary to solve a particular task.
Thus, only important features are highlighted. The example in Fig. \ref{fig: corr_example}A illustrates that the output values of two neurons are linearly dependent, which entails that many objects in this space are indistinguishable. On the contrary (Fig. \ref{fig: corr_example}B), decorrelation of neurons provides more information and, as a result, the ability to distinguish most objects.

\begin{figure}[h]
    \centering
    \includegraphics[width=0.6\textwidth]{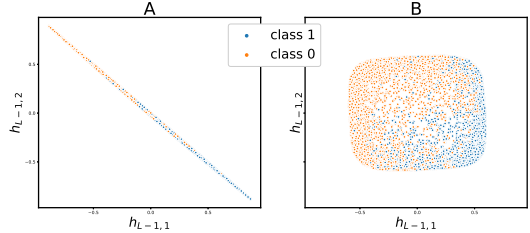}
    \caption{A) Correlated neurons. Only end task semantics are taken into account. B) Uncorrelated neurons. More objects are distinguishable}
    \label{fig: corr_example}
\end{figure}

In the following paragraphs, we identify the main causes of the correlation of neurons and suggest ways to prevent it. We decided to consider the correlations before and after activation of neurons separately.

\paragraph{Reducing correlation of neurons before activation} Statement \ref{state_1} explains the main reason for the correlation that occurs before activation.

\begin{statement}\label{state_1}
    Suppose that for some layer $l$ the following conditions are satisfied:
    \begin{enumerate}
        \item $\mathbb{E}[\Phi(\vh_l)] = 0$
        \item $\mathbb{E}[\Phi(\vh_l)^T \Phi(\vh_l)] = \sigma_l^2 \mI$ - covariance matrix as we required $\mathbb{E}[\Phi(\vh_l)] = 0$
    \end{enumerate}
    Then, in order for correlation not to occur on the layer $l+1$, it is necessary that the weight matrix $\mW_{l+1}$ be orthogonal, that is, satisfy the condition $\mW_{l+1}^T \mW_{l+1} = \mI$
\end{statement}

The statement \ref{state_1} shows that the first correlation factor on a fully connected layer is a non-orthogonal weight matrix, given that, if the input neurons do not correlate. See Appendix \ref{app_tp} for the notations and \ref{app_tp_st1} for proof of the statement \ref{state_1}. During training, the network does not try to maintain this property, solving the  problem at hand. Later in this paper, we will add regularization to penalize the loss function if the weights are not orthogonal.

The corollary follows from the statement \ref{state_1}, which gives the second condition for preventing correlation. This corollary states that the dimension of the layer $l + 1$ should be no greater than on the layer $l$. Otherwise, this leads to a correlation and does not increase information. See Appendix \ref{app_tp_st1} for proof this corollary.

\begin{corollary}\label{up_bound}
    Suppose that the conditions of statement \ref{state_1} are satisfied, then if the dimension $N_{l+1} > N_l$, then there is certainly a pair of neurons $\vh_{l+1,i}, \vh_{l+1,j}; \text{ } i \neq j: \Cov (\vh_{l+1,i}, \vh_{l+1,j}) \neq 0$
\end{corollary}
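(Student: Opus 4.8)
The plan is to reduce the statement to an elementary fact about the columns of the weight matrix, using the covariance identity already established in the proof of Statement \ref{state_1}. Under conditions 1 and 2, writing the pre-activations as the linear map $\vh_{l+1} = \mW_{l+1}^T \Phi(\vh_l)$ (up to a bias, which does not affect second moments) and using $\mathbb{E}[\Phi(\vh_l)\Phi(\vh_l)^T] = \sigma_l^2 \mI$, the covariance on layer $l+1$ is
\begin{equation}
    \Cov(\vh_{l+1}) = \mathbb{E}[\vh_{l+1}\vh_{l+1}^T] = \mW_{l+1}^T \, \mathbb{E}[\Phi(\vh_l)\Phi(\vh_l)^T] \, \mW_{l+1} = \sigma_l^2 \, \mW_{l+1}^T \mW_{l+1}.
\end{equation}
Writing $\vw_1, \dots, \vw_{N_{l+1}} \in \R^{N_l}$ for the columns of $\mW_{l+1}$, the off-diagonal entries are $\Cov(\vh_{l+1,i}, \vh_{l+1,j}) = \sigma_l^2 \, \vw_i^T \vw_j$ for $i \neq j$. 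So proving the corollary amounts to showing that these $N_{l+1}$ vectors, all living in the $N_l$-dimensional space $\R^{N_l}$, cannot simultaneously be pairwise orthogonal.

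I would then argue by contradiction. Suppose no correlation occurs, i.e. $\vw_i^T \vw_j = 0$ for every $i \neq j$. A family of pairwise orthogonal nonzero vectors is linearly independent, so $\{\vw_1, \dots, \vw_{N_{l+1}}\}$ would be a linearly independent set of size $N_{l+1}$ inside $\R^{N_l}$. Since $N_{l+1} > N_l$, this contradicts the fact that any linearly independent family in $\R^{N_l}$ has at most $N_l$ members. Hence at least one pair $i \neq j$ satisfies $\vw_i^T \vw_j \neq 0$, which is exactly $\Cov(\vh_{l+1,i}, \vh_{l+1,j}) \neq 0$. Equivalently, one can phrase it as a rank argument: $\sigma_l^2 \mW_{l+1}^T \mW_{l+1}$ is an $N_{l+1} \times N_{l+1}$ Gram matrix of rank at most $N_l < N_{l+1}$, hence singular, and a singular symmetric matrix cannot be diagonal with strictly positive diagonal entries.

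The step that needs care — and the main obstacle to a fully rigorous statement — is the degenerate case in which some column vanishes, $\vw_k = 0$. Then the $k$-th neuron is deterministic, $\Var(\vh_{l+1,k}) = \sigma_l^2\, \vw_k^T \vw_k = 0$, and the orthogonality argument breaks: for instance, with $N_l = 1$, $N_{l+1} = 2$ and $\mW_{l+1} = (1,\,0)$, the covariance matrix is diagonal even though $N_{l+1} > N_l$. The implicit hypothesis that rescues the corollary is that every neuron carries information, i.e. $\Var(\vh_{l+1,k}) > 0$ and hence $\vw_k \neq 0$ for all $k$; a zero-variance neuron is constant and, consistent with the paper's information-theoretic reading, "does not increase information." Under this non-degeneracy assumption the pairwise-orthogonality-implies-independence argument goes through verbatim, so in the final write-up I would make this assumption explicit rather than leave it tacit.
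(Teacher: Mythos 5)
Your argument is essentially the paper's own: both proofs reduce the claim to the impossibility of fitting $N_{l+1}$ pairwise orthogonal columns of $\mW_{l+1}$ into $\R^{N_l}$ and conclude by a dimension/rank count. Where you differ is in care, and your version is the more accurate one. The paper's proof asserts that the absence of correlation on layer $l+1$ "is possible only if $\mW_{l+1}$ is an orthogonal matrix," i.e.\ $\mW_{l+1}^T\mW_{l+1}=\mI$; but since $\Cov(\vh_{l+1})=\sigma_l^2\,\mW_{l+1}^T\mW_{l+1}$, vanishing off-diagonal covariances only force the columns to be \emph{pairwise orthogonal}, not orthonormal, which is exactly the weaker reduction you make. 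From there the paper silently uses that pairwise orthogonal vectors are linearly independent, which fails if a column is zero; your counterexample ($N_l=1$, $N_{l+1}=2$, $\mW_{l+1}=(1,\ 0)$, giving a diagonal covariance matrix despite $N_{l+1}>N_l$) shows the corollary is literally false without a non-degeneracy hypothesis such as $\Var(\vh_{l+1,k})>0$ for all $k$. Making that hypothesis explicit, as you propose, is the right fix; it is harmless in context since a zero-variance neuron is constant and carries no information. So: same route, but you have correctly identified and patched a gap that the paper's proof leaves open.
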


It should be noted that there are also the studies addressing orthogonal weight matrices \citep{huang2017orthogonal, jia2019orthogonal, xie2017all}. However all of these works consider this topic from optimization perspective. In particular, in \citep{cho2017riemannian} was proposed an approach for optimization of loss on the Stiefel manifold     
    $
        \mathcal{V}_p(\R^n) = \{\mW \in \R^{n \times p}|\text{ } \mW^T \mW = \mI\}
    $
to ensure orthonormality of weight matrices throughout training. To achieve this, they applied the orthogonality regularization (\ref{st_reg2}) to require the Gram matrix of the weight matrix to be close to identity matrix.  In our study we also use regularization (\ref{st_reg2}) to ensure orthonormality of weight matrices.

\begin{equation} \label{st_reg2}
    \frac{1}{2}\sum_{l=1}^{L} \lambda_l \left|\left|\mW_l^T \mW_l - \mI\right|\right|^2_F,
\end{equation}

\paragraph{Providing the necessary moments of neurons after activation}  In the statement \ref{state_1}, we relied on the zero expected value and the same variance of units in the layer. But the nonlinear activation function does not guarantee the preservation of these properties. Due to this fact, we cannot reason in the same way for the following layers. Therefore, we propose using an activation normalization approach similar to Batch-Normalization:

\begin{equation}\label{new_bn}
    \hat\phi(\vh_{l,i}) = \gamma_l \frac{\phi(\vh_{l, i}) - \mu_{\phi(\vh_{l,i})}}{\sqrt{\sigma^2_{\phi(\vh_{l,i})} + \epsilon}},
\end{equation}

where $\gamma_l$ is a trainable scale parameter, $\mu_{\phi(\vh_{l,i})}, \text{ } \sigma^2_{\phi(\vh_{l,i})}$ - parameters that are evaluated, as in\cite{ioffe2015batch}. The difference compared to the standard Batch-Normalization is that the $\gamma_l$ is the same for all neurons and we removed the $\beta_{l,i}$ parameters. This leads to an expected value of zero and the same variance $\gamma^2_l$ of each unit in the layer.

\paragraph{Reducing correlation of neurons after activation} It should be noted that an activation function can also impact the formation of redundant features \citep{ayinde2019correlation}. In particular, in this work we use $tanh(x) \in (-1, 1)$ as the activation function. 
There are several methods that prevent formation of redundant features. In \citep{cogswell2015reducing} was proposed \textit{DeCov loss} which penalizes non-diagonal elements of estimated covariance matrix of hidden representation. In \citep{desjardins2015natural,blanchette2018batch,huang2018decorrelated} were proposed approaches for learning decorrelation layers that perform the following transformation:
    $
    \Tilde{\Phi}(\vh_l) = (\Phi(\vh_l)-\mu_{\Phi(\vh_l)})\Sigma_{\Phi(\vh_l)}^{-\frac{1}{2}}
    $.
All of these methods have a common drawback: they require estimating covariance matrices. Often in practice the size of mini-batch is much smaller than is needed for estimating covariance matrices. Therefore, the covariance matrix is often singular. Moreover, methods that use the decorrelation layer are computationally expensive when it comes to high-dimensional embeddings, since it is necessary to calculate the square root of the inverse covariance matrix. This is especially evident in wide neural networks. Besides, these techniques add a significant amount of parameters $\sum_{l = 1}^{L-1}N_l^2$.

As an alternative, we suggest using Dropout \citep{srivastava2014dropout}, which prevents units from co-adapting too much and reduces the correlation between neurons in the layer in proportion to $p$ - the probability of retaining a unit in the network. See Appendix \ref{app_tp_st2} for the proof of this statement.

It is important to note that we apply normalization to the input data (input layer $l = 0$) as well as Dropout, since it is not always possible to decorrelate data so that this does not affect the quality of training. Moreover, fully-connected layers are often used in more complex architectures, such as convolutional neural networks. Obviously, after convolution operations, transformed data will be correlated. In this case, we must apply dropout after the convolutional block in order to reduce the correlation of neurons and use the proposed approach for vectorization.

\subsection{Neural network from the Random Projection perspective} \label{random_projection}
In the previous section, we proposed techniques of minimizing information loss between layers. However, it is not guaranteed that the stored information is useful or sufficient for the similarity measuring of objects. In order to prove this, we consider a neural network as an application of the Random Projection method that allow us to estimate a lower bound of the representation size and define a metric to measure similarity. 
    
The Random Projection method \citep{matouvsek2013lecture} is one of the methods of dimensionality reduction, which is based on the Johnson-Lindenstrauss lemma \citep{matouvsek2013lecture}:
    
\begin{lemma}\label{JLL}
    Let $\varepsilon \in (0,1) \text{ and } X=\{\vx_1, \dots, \vx_n\}$ - a set of $n$ points in space $\mathbb{R}^d$; $k \geq \frac{C \log n}{\varepsilon^2}$, where $C > 0$ is a large enough constant. Then there exists a linear map $f: \mathbb{R}^d \rightarrow \mathbb{R}^k$ such that $\forall x, x' \in X$:
    
    \begin{equation}\label{jll-inequality}
        (1 - \varepsilon)||\vx - \vx'|| \leq ||f(\vx) - f(\vx')|| \leq (1 + \varepsilon)||\vx - \vx'||
    \end{equation}
\end{lemma}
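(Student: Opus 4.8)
The plan is to prove this by the probabilistic method: I construct $f$ as a \emph{random} linear map and show that with positive probability it satisfies \eqref{jll-inequality} for all pairs simultaneously, which forces such a map to exist. First I would set $f(\vx) = \frac{1}{\sqrt{k}}\,\mR\vx$, where $\mR \in \R^{k \times d}$ is a random matrix with i.i.d.\ standard Gaussian entries. Since $f$ is linear, $f(\vx) - f(\vx') = f(\vx - \vx')$, so it suffices to control $\|f(\vu)\|$ over the $\binom{n}{2}$ difference vectors $\vu = \vx - \vx'$; by the positive homogeneity of \eqref{jll-inequality} in $\vu$, I may further rescale and assume $\|\vu\| = 1$.

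The core of the argument is a concentration estimate for a single fixed unit vector $\vu$. A direct computation gives $\E[\|f(\vu)\|^2] = 1$, since $\|f(\vu)\|^2 = \frac{1}{k}\sum_{i=1}^{k} Z_i^2$ with the $Z_i = (\mR\vu)_i$ i.i.d.\ $N(0,1)$, so $k\,\|f(\vu)\|^2$ is chi-squared with $k$ degrees of freedom. The key step is then the two-sided tail bound
$$\Pr\!\left[\,\bigl|\,\|f(\vu)\|^2 - 1\,\bigr| \geq \varepsilon \,\right] \leq 2\exp\!\left(-c\,k\,\varepsilon^2\right)$$
for an absolute constant $c > 0$. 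I would obtain this by a Chernoff-style argument: bound the moment generating function $\E[e^{t(Z_i^2 - 1)}] = e^{-t}(1-2t)^{-1/2}$ (for $t < \tfrac12$), take the product over the $k$ independent coordinates, and optimize over $t$, using the expansion $\log(1-2t) \approx -2t - 2t^2$ for small $t$ to extract the $\varepsilon^2$ scaling.

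Finally I would apply a union bound over all $\binom{n}{2} < n^2$ pairs. The probability that \emph{some} pair violates the squared-distance estimate is at most $n^2 \cdot 2\exp(-c\,k\,\varepsilon^2)$, and choosing $k \geq \frac{C\log n}{\varepsilon^2}$ with $C$ large enough (depending only on $c$) drives this below $1$. Hence some realization of $\mR$ yields a map $f$ satisfying all the inequalities at once, which proves the lemma. The passage from the squared form to \eqref{jll-inequality} itself is harmless: on $(0,1)$ one has $\sqrt{1-\varepsilon} \geq 1-\varepsilon$ and $\sqrt{1+\varepsilon} \leq 1+\varepsilon$, so taking square roots preserves the stated bounds (after at most a constant rescaling of $\varepsilon$ absorbed into $C$).

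I expect the main obstacle to be the concentration inequality of the second paragraph — the Chernoff bound on the chi-squared tails. The expectation computation and the union bound are routine, but producing a clean two-sided exponential tail with the correct $\varepsilon^2$ dependence requires handling the upper and lower deviations separately, since the optimal Chernoff parameter $t$ is positive for the right tail and negative for the left, and the MGF bound behaves asymmetrically in the two regimes.
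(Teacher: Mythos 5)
The paper does not prove this lemma at all --- it is imported verbatim from \citet{matouvsek2013lecture} as a known result, so there is no in-paper argument to compare against. Your proposal is the standard probabilistic proof of Johnson--Lindenstrauss (Gaussian random matrix, chi-squared MGF/Chernoff concentration for a fixed unit difference vector, union bound over the $\binom{n}{2}$ pairs), and it is correct as outlined; in particular the final square-root step needs no rescaling of $\varepsilon$, since $\sqrt{1-\varepsilon}\geq 1-\varepsilon$ and $\sqrt{1+\varepsilon}\leq 1+\varepsilon$ already give \eqref{jll-inequality} directly from the squared-norm bounds.
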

    
As can be seen, Lemma \ref{JLL} states only the existence of a linear map. However, there is also a probabilistic formulation of the Lemma \ref{JLL}, which states that if we take a sufficiently large $k$ and consider a random orthogonal projection onto the space $\mathbb{R}^k$, then inequality in \eqref{jll-inequality} holds with high probability. This is the cornerstone of this work, allowing us to obtain Neural Random Projection, which is explained below.
    
In this work, we use the \textit{tanh} activation function and assume that we are working in the linear region of the activation function. Due to this, we can make the following approximation:

\begin{equation}\label{nn_approx}
    \vh_{L-1}(\vx) \approx \vx \Tilde{\mW}_1 \Tilde{\mW}_2 \dots \mW_{L-1} + \Tilde{\vb} = x \hat{\gamma}\hat{\mW} + \hat{\vb}
\end{equation}

where $\Tilde{\mW_l}, \text{ } \Tilde{\vb}$ means that consistent use of scales and shifts was taken into account, respectively. Just $\mW_{L-1}$ means that we consider the output before activation and hence before applying \eqref{new_bn}. And $\hat{\gamma}$ is common multiplier after all modified batch-normalizations. It is obvious that the final matrix $\hat{\mW}$ is still orthonormal.

According to approximation in \eqref{nn_approx}, pre-activation outputs of the last layer are an orthogonal projection of the input data. Moreover, the process of random orthogonal initialization and optimization on the Stiefel manifold can be seen as a random sampling of an orthogonal matrix. For this reasons, we consider the neural network from the point of view of the Random Projection method. Due to this fact, we use the similarity metric $L_2$ and get a lower bound estimate for the size of the last hidden layer ($k \geq \frac{C \log n}{\varepsilon^2}$), although in practice this estimate is often too conservative, since Lemma 1 does not take data structure into account. Therefore a small $C$ and $\varepsilon$ closer to one can be considered, as will be shown in the experiments section.

\section{Experiments}\label{Exps}
In this section we present the experiments on two datasets with different data types. Both experiments have the same structure. Initially, we group a set of labels into two disjoint subsets to train a neural network for binary classification problem, and then use this model to measure similarity between input data and define hidden classes. To evaluate the quality of defining hidden classes and hence the quality of similarity measure we use kNN classifier. We use source data representation as baseline. To compare our approach, we consider 3 models (A, B, C) that have the same architectures but different regularizations in fully-connected layers. We also compare the representations from the last hidden layer with the previous gradient-based approach \citep{charpiat2019input} using Model A, since the authors did not impose additional requirements on the neural network.

\subsection{MNIST}
We performed experiments on MNIST dataset, which consists of 70,000 images of size 28x28 with a 6-1 training-testing split. This dataset was chosen to show that our approach works with images and can be plugged-in after convolution blocks. See Appendix \ref{app_exp1} for more detailed description of the experiments.

\begin{figure}[h]
    \centering
    \includegraphics[width=\textwidth]{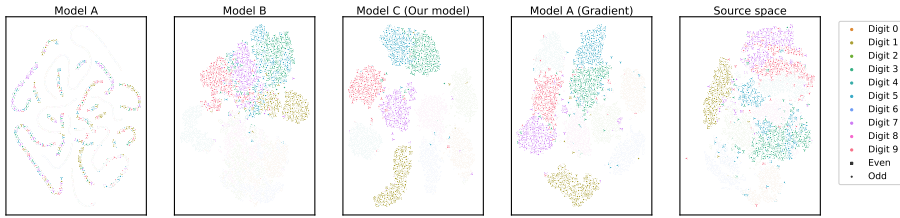}
    \caption{t-SNE last hidden layer (test set)}
    \label{fig:tsne_models}
\end{figure}

\begin{table}[h]
\centering
\caption{Experimental results (MNIST dataset)}
\label{tab:tab-results}
\begin{tabular}{ccccccc}
\begin{tabular}[c]{@{}c@{}}Model \\ (object \\ representation)\end{tabular} & \begin{tabular}[c]{@{}c@{}}Model\\accuracy\\ (even/odd)\end{tabular} & \begin{tabular}[c]{@{}c@{}}kNN \\ digit\\ accuracy\end{tabular} & \begin{tabular}[c]{@{}c@{}}kNN \\ even/odd\\ accuracy\end{tabular} & \begin{tabular}[c]{@{}c@{}}Vectorization \\ time (s)\end{tabular} & \begin{tabular}[c]{@{}c@{}}kNN \\ prediction\\ time (s)\end{tabular} & \begin{tabular}[c]{@{}c@{}}Space\\ dimension\end{tabular} \\ \hline
A ($\nabla_{\theta}f_{\theta}$)  & - & 96.81 & 98.74 & \textcolor{red}{651.6} & \textcolor{red}{177.0} & \textcolor{red}{16297} \\ \hline
A ($h_{L-1}$) & 97.86 & \textcolor{red}{25.32} & 97.72 & 1.62 & 17.57 & 32 \\ \hline
B ($h_{L-1}$) & 99.03 & 92.81 & 99.29 & 1.53 & 15.3 & 32 \\ \hline
C ($h_{L-1}$) & \textbf{99.22} & \textbf{98.15} & \textbf{99.54} & 1.54 & 15.36 & 32 \\ \hline
Source space & - & 94.29 & 97.16 & - & 19.76 & 784
\end{tabular}
\end{table}

\paragraph{Results} As illustrated in Table \ref{tab:tab-results}, we achieved results comparable (kNN digit accuracy) with the previous approach. We also obtained a much smaller dimension of the representation (32 vs 16k), and in addition it is smaller than the original dimension (32 vs 784). This allowed us to drastically reduce the time for vectorization, using only the forward pass, and the time to measure the similarity of objects (kNN prediction time) and search for the closest object. It should be noted that only our model (Model C) achieves comparable quality with the previous approach. Moreover we could improve evaluation of similarity in comparison to source representation. 

Although the classic Batch-Normalization (Model B) gives a good increase in the quality of digit recognition compared to Model A, it still fails to achieve sufficient results on similarity task. This is well illustrated by the visualization of the representation of the last hidden layer (Fig. \ref{fig:tsne_models}). As can be seen, Model A can only take into account the semantics of the original problem. In Model B, for the most part, the numbers are locally distinguishable, but all the same, the classes are strongly mixed, which is confirmed by the presence of high correlation (Fig. \ref{fig:corr_models}). Our model (Model C) can significantly reduce the correlation and gives an explicit separation with a large margin between the classes. In addition, the proposed modifications do not impair the quality of the model, as seen in Table \ref{tab:tab-results} (Model accuracy).

\begin{figure}[h]
    \centering
    \includegraphics[width=\textwidth]{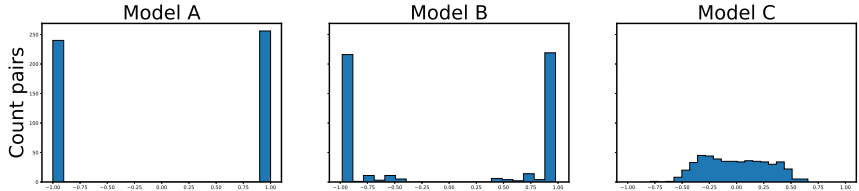}
    \caption{Distribution of correlations between pairs of neurons of the last hidden layer}
    \label{fig:corr_models}
\end{figure}

\subsection{Henan Renmin Hospital Data}
Next, we used physical examination dataset \citep{maxwell2017deep}, which contains 110,300 anonymous medical examination records. We retained only four most frequent classes (Normal, Fatty Liver, Hypertension and Hypertension \& Fatty Liver), because this dataset is highly imbalanced. As a result, we got 100140 records. After that we split four classes on two corresponding groups "Sick" (Fatty Liver, Hypertension and Hypertension \& Fatty Liver) and "Healthy" (Normal). This task is more difficult in comparison to the previous experiment because hidden classes are semantically related which makes it easier for the neural network to mix these classes into one. After that we divided data set into train and test subsets in proportion 4-1. Then we used \textit{Tomek's links} under-sampling method \citep{tomek1976generalization} on the train subset to improve quality of the models and as a result we got 75132 train and 20028 test records.

This dataset was chosen to show that our approach works well with categorical features. Also we show that we can make the first layer wider than the source dimension and after that plug-in our approach. As mentioned above, the original classes are imbalanced so instead of accuracy metric we used $F_{1micro}$ score to evaluate the quality of the similarity measurement (kNN $F_{1micro}$). See Appendix \ref{app_exp2} for more detailed description of the experiments.

\begin{figure}[h]
    \centering
    \includegraphics[width=\textwidth]{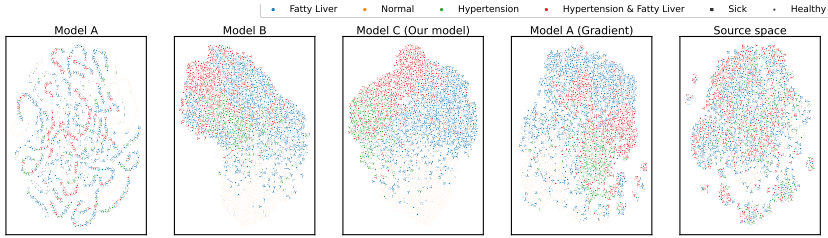}
    \caption{t-SNE last hidden layer (test set)}
    \label{fig:tsne_models_md}
\end{figure}

\begin{table}[h]
\centering
\caption{Experimental results (Henan Renmin Hospital Data)}
\label{tab:tab-results2}
\begin{tabular}{ccccccc}
\begin{tabular}[c]{@{}c@{}}Model \\ (object \\ representation)\end{tabular} & \begin{tabular}[c]{@{}c@{}}Model\\accuracy\\ (sick/\\healthy)\end{tabular} & \begin{tabular}[c]{@{}c@{}}kNN \\ disease\\ $F_{1micro}$\end{tabular} & \begin{tabular}[c]{@{}c@{}}kNN \\ sick/healthy\\ accuracy\end{tabular} & \begin{tabular}[c]{@{}c@{}}Vectorization \\ time (s)\end{tabular} & \begin{tabular}[c]{@{}c@{}}kNN \\ prediction\\ time (s)\end{tabular} & \begin{tabular}[c]{@{}c@{}}Space\\ dimension\end{tabular} \\ \hline
A ($\nabla_{\theta}f_{\theta}$)  & - & 79.33 & 88.18 & \textcolor{red}{167.2} & \textcolor{red}{765.6} & \textcolor{red}{29281} \\ \hline
A ($h_{L-1}$) & \textbf{89.57} & 68.99 & 89.55 & 0.10 & 42.37 & 16 \\ \hline
B ($h_{L-1}$) & 89.49 & 77.53 & \textbf{90.04} & 0.12 & 35.01 & 16 \\ \hline
C ($h_{L-1}$) & 89.15 & \textbf{80.76} & 89.61 & 0.12 & 37.13 & 16 \\ \hline
Source space & - & 66.97 & 81.07 & - & 35.45 & 62
\end{tabular}
\end{table}

\paragraph{Results}
As illustrated in Table \ref{tab:tab-results2}, we achieved results comparable (kNN disease $F_{1micro}$ score) with the previous approach. As in the previous experiments we obtained a much smaller dimension of the representation (32 vs 29k) and drastically reduced the time for vectorization and the time to measure the similarity of objects (kNN prediction time). As can be seen, our approach (Model C) further improves Model B, which can be explained by additional reduction of correlation between neurons (Fig \ref{fig:corr_models2}). It should be noted that we noticeably improved the quality of similarity measurement in comparison to the source space (66.97 vs 80.76) as can be seen also in Fig. \ref{fig:tsne_models_md}.

\begin{figure}[h!]
    \centering
    \includegraphics[width=\textwidth]{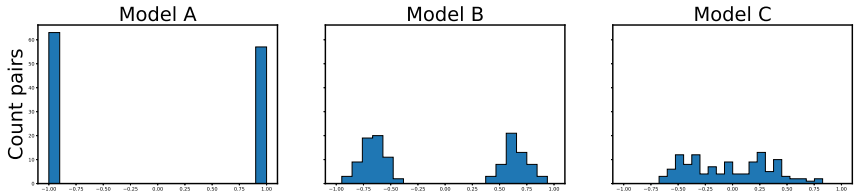}
    \caption{Distribution of correlations between pairs of neurons of the last hidden layer}
    \label{fig:corr_models2}
\end{figure}

\section{Conclusion}
In this work, we proposed an approach for evaluation of similarity between objects via obtaining implicit data representation. In order to obtain implicit data representation, we introduced the Neural Random Projection method, which includes regularization of the neural network in the form of optimization among orthogonal matrices, a modification of Batch-Normalization and its combination with Dropout. This allowed us to retain a significant part of the information about the source structure. We experimentally compared our approach to the previous one and showed that it significantly reduced both computation time and the size of the input representation. Finally, our approach allowed us to introduce the $L_2$ metric on the representations and improve the quality of the similarity measurement in comparison with the source space. It should be noted that we only considered the size of the last hidden layer, using the approximation from \eqref{nn_approx}. We did not evaluate how the size of previous layers affects the final presentation. This remains an open question, and in the future we are planning to research it more. We expect that our study will be useful for many applied problems, where it is initially very difficult to determine a method for measuring data similarity in its original form.

\newpage
\bibliography{iclr2021_conference}

\begin{thebibliography}{35}
\providecommand{\natexlab}[1]{#1}
\providecommand{\url}[1]{\texttt{#1}}
\expandafter\ifx\csname urlstyle\endcsname\relax
  \providecommand{\doi}[1]{doi: #1}\else
  \providecommand{\doi}{doi: \begingroup \urlstyle{rm}\Url}\fi

\bibitem[Ayinde et~al.(2019)Ayinde, Inanc, and Zurada]{ayinde2019correlation}
Babajide~O Ayinde, Tamer Inanc, and Jacek~M Zurada.
\newblock On correlation of features extracted by deep neural networks.
\newblock In \emph{2019 International Joint Conference on Neural Networks
  (IJCNN)}, pp.\  1--8. IEEE, 2019.

\bibitem[Bank et~al.(2020)Bank, Koenigstein, and Giryes]{bank2020autoencoders}
Dor Bank, Noam Koenigstein, and Raja Giryes.
\newblock Autoencoders.
\newblock \emph{arXiv preprint arXiv:2003.05991}, 2020.

\bibitem[Blanchette \& Lagani{\`e}re(2018)Blanchette and
  Lagani{\`e}re]{blanchette2018batch}
Jonathan Blanchette and Robert Lagani{\`e}re.
\newblock On batch orthogonalization layers.
\newblock \emph{CoRR}, 2018.

\bibitem[Charpiat et~al.(2019)Charpiat, Girard, Felardos, and
  Tarabalka]{charpiat2019input}
Guillaume Charpiat, Nicolas Girard, Loris Felardos, and Yuliya Tarabalka.
\newblock Input similarity from the neural network perspective.
\newblock In \emph{Advances in Neural Information Processing Systems}, pp.\
  5343--5352, 2019.

\bibitem[Chen et~al.(2020)Chen, Kornblith, Norouzi, and Hinton]{chen2020simple}
Ting Chen, Simon Kornblith, Mohammad Norouzi, and Geoffrey Hinton.
\newblock A simple framework for contrastive learning of visual
  representations.
\newblock \emph{arXiv preprint arXiv:2002.05709}, 2020.

\bibitem[Cho \& Lee(2017)Cho and Lee]{cho2017riemannian}
Minhyung Cho and Jaehyung Lee.
\newblock Riemannian approach to batch normalization.
\newblock In \emph{Advances in Neural Information Processing Systems}, pp.\
  5225--5235, 2017.

\bibitem[Cogswell et~al.(2015)Cogswell, Ahmed, Girshick, Zitnick, and
  Batra]{cogswell2015reducing}
Michael Cogswell, Faruk Ahmed, Ross Girshick, Larry Zitnick, and Dhruv Batra.
\newblock Reducing overfitting in deep networks by decorrelating
  representations.
\newblock \emph{arXiv preprint arXiv:1511.06068}, 2015.

\bibitem[Desjardins et~al.(2015)Desjardins, Simonyan, Pascanu,
  et~al.]{desjardins2015natural}
Guillaume Desjardins, Karen Simonyan, Razvan Pascanu, et~al.
\newblock Natural neural networks.
\newblock In \emph{Advances in neural information processing systems}, pp.\
  2071--2079, 2015.

\bibitem[Devlin et~al.(2018)Devlin, Chang, Lee, and Toutanova]{devlin2018bert}
Jacob Devlin, Ming-Wei Chang, Kenton Lee, and Kristina Toutanova.
\newblock Bert: Pre-training of deep bidirectional transformers for language
  understanding.
\newblock \emph{arXiv preprint arXiv:1810.04805}, 2018.

\bibitem[Doersch et~al.(2015)Doersch, Gupta, and
  Efros]{doersch2015unsupervised}
Carl Doersch, Abhinav Gupta, and Alexei~A Efros.
\newblock Unsupervised visual representation learning by context prediction.
\newblock In \emph{Proceedings of the IEEE international conference on computer
  vision}, pp.\  1422--1430, 2015.

\bibitem[Dosovitskiy et~al.(2014)Dosovitskiy, Springenberg, Riedmiller, and
  Brox]{dosovitskiy2014discriminative}
Alexey Dosovitskiy, Jost~Tobias Springenberg, Martin Riedmiller, and Thomas
  Brox.
\newblock Discriminative unsupervised feature learning with convolutional
  neural networks.
\newblock In \emph{Advances in neural information processing systems}, pp.\
  766--774, 2014.

\bibitem[Gidaris et~al.(2018)Gidaris, Singh, and
  Komodakis]{gidaris2018unsupervised}
Spyros Gidaris, Praveer Singh, and Nikos Komodakis.
\newblock Unsupervised representation learning by predicting image rotations.
\newblock \emph{arXiv preprint arXiv:1803.07728}, 2018.

\bibitem[Hanawa et~al.(2020)Hanawa, Yokoi, Hara, and
  Inui]{hanawa2020evaluation}
Kazuaki Hanawa, Sho Yokoi, Satoshi Hara, and Kentaro Inui.
\newblock Evaluation criteria for instance-based explanation.
\newblock \emph{arXiv preprint arXiv:2006.04528}, 2020.

\bibitem[He et~al.(2016)He, Zhang, Ren, and Sun]{he2016deep}
Kaiming He, Xiangyu Zhang, Shaoqing Ren, and Jian Sun.
\newblock Deep residual learning for image recognition.
\newblock In \emph{Proceedings of the IEEE conference on computer vision and
  pattern recognition}, pp.\  770--778, 2016.

\bibitem[Hu et~al.(2015)Hu, Lu, and Tan]{hu2015deep}
Junlin Hu, Jiwen Lu, and Yap-Peng Tan.
\newblock Deep transfer metric learning.
\newblock In \emph{Proceedings of the IEEE conference on computer vision and
  pattern recognition}, pp.\  325--333, 2015.

\bibitem[Huang et~al.(2017)Huang, Liu, Lang, Yu, Wang, and
  Li]{huang2017orthogonal}
Lei Huang, Xianglong Liu, Bo~Lang, Adams~Wei Yu, Yongliang Wang, and Bo~Li.
\newblock Orthogonal weight normalization: Solution to optimization over
  multiple dependent stiefel manifolds in deep neural networks.
\newblock \emph{arXiv preprint arXiv:1709.06079}, 2017.

\bibitem[Huang et~al.(2018)Huang, Yang, Lang, and Deng]{huang2018decorrelated}
Lei Huang, Dawei Yang, Bo~Lang, and Jia Deng.
\newblock Decorrelated batch normalization.
\newblock In \emph{Proceedings of the IEEE Conference on Computer Vision and
  Pattern Recognition}, pp.\  791--800, 2018.

\bibitem[Ioffe \& Szegedy(2015)Ioffe and Szegedy]{ioffe2015batch}
Sergey Ioffe and Christian Szegedy.
\newblock Batch normalization: Accelerating deep network training by reducing
  internal covariate shift.
\newblock \emph{arXiv preprint arXiv:1502.03167}, 2015.

\bibitem[Jia et~al.(2019)Jia, Li, Wen, Liu, and Tao]{jia2019orthogonal}
Kui Jia, Shuai Li, Yuxin Wen, Tongliang Liu, and Dacheng Tao.
\newblock Orthogonal deep neural networks.
\newblock \emph{IEEE transactions on pattern analysis and machine
  intelligence}, 2019.

\bibitem[LeCun et~al.(2012)LeCun, Bottou, Orr, and
  M{\"u}ller]{lecun2012efficient}
Yann~A LeCun, L{\'e}on Bottou, Genevieve~B Orr, and Klaus-Robert M{\"u}ller.
\newblock Efficient backprop.
\newblock In \emph{Neural networks: Tricks of the trade}, pp.\  9--48.
  Springer, 2012.

\bibitem[Li et~al.(2016)Li, Wang, Shi, Liu, and Hou]{li2016revisiting}
Yanghao Li, Naiyan Wang, Jianping Shi, Jiaying Liu, and Xiaodi Hou.
\newblock Revisiting batch normalization for practical domain adaptation.
\newblock \emph{arXiv preprint arXiv:1603.04779}, 2016.

\bibitem[Matou{\v{s}}ek(2013)]{matouvsek2013lecture}
Jir{\i} Matou{\v{s}}ek.
\newblock Lecture notes on metric embeddings.
\newblock Technical report, Technical report, ETH Z{\"u}rich, 2013.

\bibitem[Maxwell et~al.(2017)Maxwell, Li, Yang, Weng, Ou, Hong, Zhou, Gong, and
  Zhang]{maxwell2017deep}
Andrew Maxwell, Runzhi Li, Bei Yang, Heng Weng, Aihua Ou, Huixiao Hong,
  Zhaoxian Zhou, Ping Gong, and Chaoyang Zhang.
\newblock Deep learning architectures for multi-label classification of
  intelligent health risk prediction.
\newblock \emph{BMC bioinformatics}, 18\penalty0 (14):\penalty0 523, 2017.

\bibitem[Noroozi \& Favaro(2016)Noroozi and Favaro]{noroozi2016unsupervised}
Mehdi Noroozi and Paolo Favaro.
\newblock Unsupervised learning of visual representations by solving jigsaw
  puzzles.
\newblock In \emph{European Conference on Computer Vision}, pp.\  69--84.
  Springer, 2016.

\bibitem[Noroozi et~al.(2017)Noroozi, Pirsiavash, and
  Favaro]{noroozi2017representation}
Mehdi Noroozi, Hamed Pirsiavash, and Paolo Favaro.
\newblock Representation learning by learning to count.
\newblock In \emph{Proceedings of the IEEE International Conference on Computer
  Vision}, pp.\  5898--5906, 2017.

\bibitem[Oord et~al.(2018)Oord, Li, and Vinyals]{oord2018representation}
Aaron van~den Oord, Yazhe Li, and Oriol Vinyals.
\newblock Representation learning with contrastive predictive coding.
\newblock \emph{arXiv preprint arXiv:1807.03748}, 2018.

\bibitem[Peters et~al.(2018)Peters, Neumann, Iyyer, Gardner, Clark, Lee, and
  Zettlemoyer]{peters2018deep}
Matthew~E Peters, Mark Neumann, Mohit Iyyer, Matt Gardner, Christopher Clark,
  Kenton Lee, and Luke Zettlemoyer.
\newblock Deep contextualized word representations.
\newblock \emph{arXiv preprint arXiv:1802.05365}, 2018.

\bibitem[Ruder(2017)]{ruder2017overview}
Sebastian Ruder.
\newblock An overview of multi-task learning in deep neural networks.
\newblock \emph{arXiv preprint arXiv:1706.05098}, 2017.

\bibitem[Simonyan \& Zisserman(2014)Simonyan and Zisserman]{simonyan2014very}
Karen Simonyan and Andrew Zisserman.
\newblock Very deep convolutional networks for large-scale image recognition.
\newblock \emph{arXiv preprint arXiv:1409.1556}, 2014.

\bibitem[Srivastava et~al.(2014)Srivastava, Hinton, Krizhevsky, Sutskever, and
  Salakhutdinov]{srivastava2014dropout}
Nitish Srivastava, Geoffrey Hinton, Alex Krizhevsky, Ilya Sutskever, and Ruslan
  Salakhutdinov.
\newblock Dropout: a simple way to prevent neural networks from overfitting.
\newblock \emph{The journal of machine learning research}, 15\penalty0
  (1):\penalty0 1929--1958, 2014.

\bibitem[Tomek(1976)]{tomek1976generalization}
Ivan Tomek.
\newblock A generalization of the k-nn rule.
\newblock \emph{IEEE Transactions on Systems, Man, and Cybernetics}, \penalty0
  (2):\penalty0 121--126, 1976.

\bibitem[Wang \& Deng(2018)Wang and Deng]{wang2018deep}
Mei Wang and Weihong Deng.
\newblock Deep visual domain adaptation: A survey.
\newblock \emph{Neurocomputing}, 312:\penalty0 135--153, 2018.

\bibitem[Xie et~al.(2017)Xie, Xiong, and Pu]{xie2017all}
Di~Xie, Jiang Xiong, and Shiliang Pu.
\newblock All you need is beyond a good init: Exploring better solution for
  training extremely deep convolutional neural networks with orthonormality and
  modulation.
\newblock In \emph{Proceedings of the IEEE Conference on Computer Vision and
  Pattern Recognition}, pp.\  6176--6185, 2017.

\bibitem[Yan et~al.(2017)Yan, Ding, Li, Wang, Xu, and Zuo]{yan2017mind}
Hongliang Yan, Yukang Ding, Peihua Li, Qilong Wang, Yong Xu, and Wangmeng Zuo.
\newblock Mind the class weight bias: Weighted maximum mean discrepancy for
  unsupervised domain adaptation.
\newblock In \emph{Proceedings of the IEEE Conference on Computer Vision and
  Pattern Recognition}, pp.\  2272--2281, 2017.

\bibitem[Zellinger et~al.(2017)Zellinger, Grubinger, Lughofer, Natschl{\"a}ger,
  and Saminger-Platz]{zellinger2017central}
Werner Zellinger, Thomas Grubinger, Edwin Lughofer, Thomas Natschl{\"a}ger, and
  Susanne Saminger-Platz.
\newblock Central moment discrepancy (cmd) for domain-invariant representation
  learning.
\newblock \emph{arXiv preprint arXiv:1702.08811}, 2017.

\end{thebibliography}
\bibliographystyle{iclr2021_conference}

\newpage
\appendix
\section{Theoretical part}\label{app_tp}
We introduce the following notation:
    
\begin{enumerate}
    \item $l \in \{0, 1, \dots, L\}$ - layer number,
    
    where $l=0$ - input layer (source space), $l=L$ - output layer, other - hidden layers.
    \item $N_l$ - number of units in layer $l$
    \item $\vh_l \in \R^{N_l}$ - pre-activation vector
    \item $\Phi(\vh_l) = (\phi(\vh_{l,1}), \dots, \phi(\vh_{l,N_l}))$ - activation vector
    \item $\mW_l \in \R^{N_{l-1} \times N_{l}}, \text{ } \vb_l \in \R^{N_l}$ - weight matrix and bias vector
\end{enumerate}

\subsection{Reducing correlation of neurons before activation}\label{app_tp_st1} 
\addtocounter{statement}{-1}
\begin{statement}
    Suppose that for some layer $l$ the following conditions are satisfied:
    \begin{enumerate}
        \item $\mathbb{E}[\Phi(\vh_l)] = 0$
        \item $\mathbb{E}[\Phi(\vh_l)^T \Phi(\vh_l)] = \sigma_l^2 \mI$ - covariance matrix as we required $\mathbb{E}[\Phi(\vh_l)] = 0$
    \end{enumerate}
    Then, in order for correlation not to occur on the layer $l+1$, it is necessary that the weight matrix $\mW_{l+1}$ be orthogonal, that is, satisfy the condition $\mW_{l+1}^T \mW_{l+1} = \mI$.
\end{statement}

\begin{proof}
To prove this, we consider the expected value and the covariance matrix of the vector $\vh_{l+1}$, using the relation $\vh_{l+1} = \Phi(\vh_{l}) \mW_{l+1} + \vb_{l+1}$ and let the condition of orthogonality be satisfied:
    \begin{enumerate}
        \item Expected value:
            \begin{align*}
                \mathbb{E}[\vh_{l+1}] &= \mathbb{E}[\Phi(\vh_l)\mW_{l+1} + \vb_{l+1}] = 
                 \mathbb{E}[\Phi(\vh_l)]\mW_{l+1} + \mathbb{E}[\vb_{l+1}] = \vb_{l+1}
            \end{align*}
        \item Covariance matrix:
            \begin{align*}
                \mathbb{E}[(\vh_{l+1} - \mathbb{E}[\vh_{l+1}])^T (\vh_{l+1} - \mathbb{E}[\vh_{l+1}])] &= \mathbb{E}[\mW_{l+1}^T\Phi(\vh_l)^T\Phi(\vh_l)\mW_{l+1}] =\\
                &= \mW_{l+1}^T\mathbb{E}[\Phi(\vh_l)^T\Phi(\vh_l)]\mW_{l+1} =\\ &= \sigma_l^2 \mW_{l+1}^T \mW_{l+1} = \sigma_l^2 \mI
            \end{align*}
    \end{enumerate}
\end{proof}

\addtocounter{corollary}{-1}
\begin{corollary}
    Suppose that the conditions of statement \ref{state_1} are satisfied, then if the dimension $N_{l+1} > N_l$, then there is certainly a pair of neurons $\vh_{l+1,i}, \vh_{l+1,j}; \text{ } i \neq j: \Cov (\vh_{l+1,i}, \vh_{l+1,j}) \neq 0$
\end{corollary}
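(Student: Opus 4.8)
The plan is to reuse the covariance computation already carried out in the proof of Statement~\ref{state_1}. There it was shown that, under the two hypotheses, the covariance matrix of the next layer is
\[
\Sigma_{l+1} := \mathbb{E}[(\vh_{l+1}-\mathbb{E}[\vh_{l+1}])^T(\vh_{l+1}-\mathbb{E}[\vh_{l+1}])] = \sigma_l^2\, \mW_{l+1}^T\mW_{l+1},
\]
and this identity holds for \emph{any} weight matrix (orthogonality was only invoked afterwards to conclude $\Sigma_{l+1}=\sigma_l^2\mI$). Writing $\vw_1,\dots,\vw_{N_{l+1}}\in\R^{N_l}$ for the columns of $\mW_{l+1}$, the entries of $\Sigma_{l+1}$ are exactly $\Cov(\vh_{l+1,i},\vh_{l+1,j})=\sigma_l^2\,\vw_i^T\vw_j$. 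Hence two neurons are uncorrelated precisely when the corresponding columns of $\mW_{l+1}$ are orthogonal in $\R^{N_l}$, which reduces the statement to a purely linear-algebraic fact.

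First I would argue by contradiction: suppose every off-diagonal entry vanishes, i.e. $\Cov(\vh_{l+1,i},\vh_{l+1,j})=0$ for all $i\neq j$. Then $\vw_1,\dots,\vw_{N_{l+1}}$ are pairwise orthogonal. Under the natural non-degeneracy assumption that each neuron has strictly positive variance (equivalently $\vw_i\neq 0$), a family of pairwise orthogonal nonzero vectors is linearly independent, so we would obtain $N_{l+1}$ linearly independent vectors inside the $N_l$-dimensional space $\R^{N_l}$. Since $N_{l+1}>N_l$ this is impossible, giving the contradiction and forcing at least one pair $i\neq j$ with $\Cov(\vh_{l+1,i},\vh_{l+1,j})\neq 0$.

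An equivalent and perhaps more transparent route goes through rank: $\mathrm{rank}(\Sigma_{l+1})=\mathrm{rank}(\mW_{l+1}^T\mW_{l+1})=\mathrm{rank}(\mW_{l+1})\leq\min(N_l,N_{l+1})=N_l<N_{l+1}$, so $\Sigma_{l+1}$ is a singular $N_{l+1}\times N_{l+1}$ matrix. A diagonal matrix with no zero entries on its diagonal is invertible, so a singular $\Sigma_{l+1}$ cannot be diagonal with an all-positive diagonal; this again forces either a nonzero off-diagonal entry or a vanishing variance.

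The main obstacle is precisely this last alternative. Singularity only rules out $\Sigma_{l+1}$ being diagonal \emph{with strictly positive diagonal entries}, so the literal conclusion can fail in the pathological case where some column $\vw_i=0$ (a constant, ``dead'' neuron of zero variance), which leaves $\Sigma_{l+1}$ diagonal yet singular. I would dispose of this by explicitly adopting the non-degeneracy hypothesis that each neuron has positive variance --- otherwise the neuron carries no information and its correlation with the others is ill-defined --- under which the pigeonhole / linear-independence argument closes cleanly and exhibits the claimed correlated pair.
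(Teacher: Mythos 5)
Your proof is correct and follows essentially the same route as the paper's: assume all off-diagonal covariances vanish, use the identity $\Sigma_{l+1}=\sigma_l^2\,\mW_{l+1}^T\mW_{l+1}$ from the proof of Statement~1 to conclude the columns of $\mW_{l+1}$ are pairwise orthogonal, and derive a contradiction from having $N_{l+1}>N_l$ linearly independent vectors in $\R^{N_l}$. You are in fact more careful than the paper, which jumps directly to ``$\mW_{l+1}$ is orthogonal'' and thereby silently excludes the zero-column (zero-variance neuron) case that you correctly identify as the one genuine loophole requiring a non-degeneracy assumption.
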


\begin{proof}
    Suppose, towards a contradiction, that $N_{l+1} > N_l$ and $ \forall i, j \in \{1, \dots, N_ {l+1}\}, \text{ } i \neq j \Rightarrow \text{ } \Cov(\vh_{l+1,i}, \vh_{l+1,j}) = 0$. By the statement \ref{state_1}, this is possible only if $\mW_{l+1}$ is an orthogonal matrix, which means that there is a linearly independent system of $N_{l+1}$ vectors of dimension $ N_{l-1} $. It follows that rank $\mW_{l+1} \geq min(N_l, N_{l+1}) = N_l$, which is impossible, therefore the matrix $\mW_{l+1}$ is not orthogonal, thus we have a contradiction.
\end{proof}

\subsection{Reducing correlation of neurons after activation}\label{app_tp_st2}
For further discussion, we describe the Dropout method in our notation: $\vz_l = \vr_l \cdotp \hat{\Phi}(\vh_l)$, where $\vr_{l} = (\vr_{l,1}, \dots, \vr_{l,N_l}); \text{ } \vr_{l,i} \sim Bernoulli(p)$ and the $\cdotp$ sign means element-wise multiplication, $p$ the probability of retaining a unit in the network and $\hat{\Phi}$ denotes activation after modified Batch-Normalization. The following statement explains how Dropout reduces correlation.
\begin{statement}
    Dropout reduces the correlation in proportion to $p$
\end{statement}
    
\begin{proof}
    Let $C_ {ij}, \text{ } i \neq j$ - correlation value between neurons $ i, j $. Given the properties of $\hat{\phi}(\vh_ {li})$, we obtain the following expression for the correlation:
    $
        C_{ij} = \frac{\mathbb{E}[\hat{\phi}(\vh_{li}), \hat{\phi}(\vh_{lj})]}{\gamma^2}
    $. Now consider the correlation after applying Dropout:
\begin{align*}
    \hat{C}_{ij} &= \frac{\mathbb{E}[\vz_{li}, \vz_{lj}]}{\sqrt{\mathbb{E}[\vz_{li}^2]\mathbb{E}[\vz_{lj}^2]}} = \frac{\mathbb{E}[\vr_{li}\hat{\phi}(\vh_{li}), \vr_{l_i}\hat{\phi}(\vh_{lj})]}{\sqrt{\mathbb{E}[\vr_{li}^2\hat{\phi}^2(\vh_{li})]\mathbb{E}[\vr_{lj}^2\hat{\phi}^2(\vh_{lj})]}} = \\
    &= \frac{\mathbb{E}[\vr_{li}] \text{ }\mathbb{E}[\vr_{lj}]\text{ }\mathbb{E}[\hat{\phi}(\vh_{li}), \hat{\phi}(\vh_{lj})]}{\sqrt{\mathbb{E}[\vr_{li}^2] \text{ } \mathbb{E}[\vr_{lj}^2] \text{ }\mathbb{E}[\hat{\phi}^2(\vh_{li})]\mathbb{E}[\hat{\phi}^2(\vh_{lj})]}} = \\
    &= \frac{p^2 \mathbb{E}[\hat{\phi}(\vh_{li}), \hat{\phi}(\vh_{lj})]}{p\gamma^2} = pC_{ij}
    \end{align*}
\end{proof}
Here we used the fact that $\vr_{li}, \vr_{lj}$ are independent Bernoulli random variables, and they are also independent of $\hat{\phi}(\vh_{li}), \hat{\phi} (\vh_ {lj})$.

\section{Details of experiments}\label{app_exp}
\paragraph{Computing infrastructure} All experiments were performed on Intel(R) Xeon(R) CPU E5-2650 0 @ 2.00GHz, 189GB RAM and CentOS Linux 7 (Core) x86-64 operating system. The models were implemented using Python 3.6v and TensorFlow 2.2.0v library.

\subsection{MNIST}\label{app_exp1}

\paragraph{Experiments description} 
\begin{enumerate}
    \item Models (A, B, C) trained to solve binary classification problem (even/odd digit). Number of epochs - 100, optimizer - \textit{Adam} with learning rate 0.001; batch size - 256; validation split - 0.1
    \item kNN classifier trained ($k = 9$) to solve digit classification problem, using pre-trained \textbf{model A} and metric from \eqref{1d_metric} from previous approach. We measured vectorization time on the complete dataset and evaluated digit accuracy, even/odd accuracy and prediction time of this kNN model on test set.
    \item Everything is similar to the previous item, only the representations of the last hidden layer of each model and the $L_2$ metric were used.
    \item Like previous item, only the source space representation and the $L_2$ metric were used.
\end{enumerate}

We used models with the following base architecture: 

$Conv2D(16\times3\times3) \rightarrow ReLU \rightarrow MaxPolling2D(2\times2) \rightarrow Conv2D(8\times3\times3) \rightarrow ReLU \rightarrow MaxPolling2D(2\times2) \rightarrow Dense(64) \rightarrow tanh \rightarrow Dense(32) \rightarrow tanh \rightarrow Dense(1)$

The size of the last hidden layer was chosen corresponding to the lower bound from previous section:

\begin{equation*}
    \vh_{L-1} = 32 \geq  \frac{\log 6 \cdotp 10^4}{0.6^2} \approx 30.6
\end{equation*}

In all models, only $l_2$ with $\lambda = 0.01$ regularization was used in the convolution block. Further, the difference between models in fully-connected layers will be given.

\textbf{Model A}: simple network, only $l_2$ with $\lambda = 0.01$ regularization is used. 

\textbf{Model B}: orthogonal matrix initialization, regularization from \eqref{st_reg2} with $\lambda_l = 0.1$ and $l_2$ regularization on bias vector;  standard Batch-Normalization is used before activation and Dropout with $p = 0.1$ after activation. 

\textbf{Model C} (proposed model): modified Batch-Normalization is used after activation and Dropout with $p = 0.1$ after Batch-Normalization.

\subsection{HENAN RENMIN HOSPITAL DATA}\label{app_exp2}
\paragraph{Experiments description} 
\begin{enumerate}
    \item Models (A, B, C) trained to solve binary classification problem (sick/healthy). Number of epochs - 200, optimizer - \textit{Adam} with initial learning rate 0.001 and exponential decay factor 0.99 every 500 iterations; batch size - 128; validation split - 0.1
    \item kNN classifier ($k = 9$) trained to classify type of disease, using pre-trained \textbf{model A} and metric from \eqref{1d_metric}. We measured vectorization time on the complete dataset. And we evaluated $F_{1micro}$ score on hidden classes because they are imbalanced, sick/healthy accuracy and prediction time of this kNN model on test set.
    \item Everything is similar to the previous item, only the representations of the last hidden layer of each model and the $L_2$ metric were used.
    \item Like previous item, only the source space representation and the $L_2$ metric were used.
\end{enumerate}

We used models with the following base architecture:
$Dense(128) \rightarrow tanh \rightarrow Dense(96) \rightarrow tanh \rightarrow Dense(64) \rightarrow tanh \rightarrow Dense(32) \rightarrow tanh \rightarrow Dense(16) \rightarrow Dense(1)$

The size of the last hidden layer was chosen corresponding to the lower bound from previous section:

\begin{equation*}
    \vh_{L-1} = 16 \geq  \frac{\log 75132}{0.9^2} \approx 14
\end{equation*}

\textbf{Model A}: simple network, only $l_2$ with $\lambda = 0.001$ regularization is used. 

\textbf{Model B}: orthogonal matrix initialization, regularization from \eqref{st_reg2} with $\lambda_l = 0.01$ except the first layer and $l_2$ regularization on bias vector;  standard Batch-Normalization is used before activation and Dropout with $p = 0.05$ after activation. 

\textbf{Model C} (proposed model): modified Batch-Normalization is used after activation and Dropout with $p = 0.05$ after Batch-Normalization.

\end{document}